\documentclass[10pt,journal,twocolumn]{IEEEtran}
\usepackage[type1]{tinos}
\usepackage{DejaVuSansMono}
\usepackage{amsmath,amssymb}
\usepackage{graphicx}
\usepackage{tikz}
\usetikzlibrary{arrows.meta}
\usepackage{booktabs,array}
\usepackage{tabularx}
\usepackage{enumitem}
\usepackage{amsthm}
\newtheorem{assumption}{Assumption}
\newtheorem{theorem}{Theorem}
\newtheorem{proposition}{Proposition}
\usepackage{url}
\usepackage{cite}
\usepackage{microtype}
\newcommand{\kw}[1]{\texttt{#1}}
\newcolumntype{L}{>{\raggedright\arraybackslash}X}
\usepackage[unicode]{hyperref}
\hypersetup{colorlinks=false,pdfborder={0 0 0}}
\graphicspath{{./}}

\title{\huge Runtime Control for AI Agents: Closing the\\ Evaluation--Execution Gap in Operational Workflows}
\author{\begingroup\setlength{\tabcolsep}{7pt}\begin{tabular}{cccc}
Jayesh Bhadane & Ramachandra Yerramsetti & Rajesh Gupta & Sumit Kumar\\
{\scriptsize\texttt{jayesh@runctrl.ai}} & {\scriptsize\texttt{rcyerramsetti@protonmail.com}} & {\scriptsize\texttt{rajesh@runctrl.ai}} & {\scriptsize\texttt{sumit.mamoria@gmail.com}}
\end{tabular}\endgroup}
\begin{document}
\maketitle

\begin{abstract}
Language-model agents increasingly participate in operational workflows where approvals, binds, and payments create commitments. An approval valid when issued may no longer authorize execution after relevant facts, evidence, policy, or authority change. We call this the evaluation--execution gap. We specify a state-bound runtime assurance layer that separates probabilistic measurement from deterministic adjudication, binds an execution allowance to the decision-relevant basis of an approval, invalidates that allowance after relevant logged changes, and rechecks admissibility at dispatch. This distinguishes the continued validity of a prior approval from a current-state policy check. We describe eight failure modes, an assurance model, a reference architecture, and prospective interventions with retrospective remediation. In a scripted 42-case replay, the state-bound configuration identifies 15 later-invalidated approvals that a deliberately non-re-evaluating point-in-time comparator does not revisit; it holds 14 affected actions before dispatch and compensates one landed effect. These results illustrate behavior under the stipulated policies and event orders, not superiority over dispatch-time enforcement designs or production effectiveness. Under explicit interception, ingestion, dependency, and serialization assumptions, the model establishes dispatch-time validity within the registered execution surface.
\end{abstract}

\section{Introduction}
\label{sec:intro}

Language-model agents plan, call tools, and act across long interactions. $\tau$-bench studies tool-agent-user work in real-world domains~\cite{yao2024tau}. Safety, robustness, and compliance remain open as actions need constraint, observation, attribution, and interruption.

In claims, underwriting, lending, payments, and benefits, an approval, bind, release, or decision creates an obligation. It must remain valid when it takes effect.

Consider a claims agent that may approve payouts below \$5{,}000 without senior review. It approves a claim at \$4{,}800. Before payment, a revised estimate raises the loss to \$7{,}200 and requires senior approval. The first approval was valid, but the case changed. If payment uses it, the decision is stale.

Agent evaluation assesses agent behavior~\cite{yehudai2025survey}. Agent observability traces execution artifacts and decisions~\cite{dong2024agentops}. In this paper, both supply measurement inputs. Runtime enforcement can constrain a proposed action~\cite{wang2026agentspec}. We add a controller that decides whether an earlier approval remains valid after facts, evidence, policy, or authority change.

We ask how an action can remain valid at execution after case, evidence, policy, or authority change. Our stateful decision runs before each action and after relevant change. It draws on runtime assurance and runtime verification~\cite{mehmood2022blackbox,leucker2009rv} and applies them to business commitments. The contributions are an eight-mode taxonomy (Section~\ref{sec:taxonomy}), a five-property decision (Section~\ref{sec:model}), an architecture with adjudication, invalidation, dispatch validity, and replay (Section~\ref{sec:arch}), seven interventions (Section~\ref{sec:interventions}), and a 42-case baseline comparison (Section~\ref{sec:evaluation}).

This specification paper positions the layer in Section~\ref{sec:related}, develops the examples in Section~\ref{sec:gap}, and states limits in Section~\ref{sec:discussion}.

\section{Background and Related Work}
\label{sec:related}

\textbf{Measurement.} Agent evaluation assesses multi-step behavior~\cite{yehudai2025survey}, while agent observability traces the artifacts and decisions behind that behavior~\cite{dong2024agentops}. At dispatch, the controller compares the approval record with the current case to decide whether the approval still applies.

\textbf{Runtime and process control.} Runtime verification, security automata, edit automata, and runtime assurance check or constrain execution~\cite{leucker2009rv,schneider2000eas,ligatti2005editautomata,mehmood2022blackbox}. Compliance monitoring and commitments reason over processes and obligations~\cite{ly2015compliance,singh1999commitments}. A conformant trace can still carry an approval the case has outgrown.

\textbf{Governance and access.} Delegation can supply logged identity, authority, and audit data~\cite{south2025delegation}. Attribute-based access control decides whether a principal may invoke an operation~\cite{hu2014abac}. UCON adds mutable conditions and ongoing checks~\cite{park2004ucon}. Yet live permission to invoke \kw{bind\_policy} can remain while the decision to bind this case becomes invalid.

\textbf{Transactions and rules.} Optimistic concurrency control validates a read set~\cite{kung1981occ}. Our business read set can span facts, approvals, policy, and days. Policy-as-code evaluates an arriving request, not an earlier approval. Table~\ref{tab:capabilities} shows the missing combination: a state-bound business decision that is re-evaluated and enforced at dispatch.

\begin{table*}[!t]
\caption{Relationship to prior mechanisms and the proposed state-bound decision-validity model.}
\label{tab:capabilities}
\centering
\footnotesize
\setlength{\tabcolsep}{3pt}
\renewcommand{\arraystretch}{1.07}
\begin{tabular}{@{}>{\raggedright\arraybackslash}p{.225\textwidth} >{\raggedright\arraybackslash}p{.355\textwidth} >{\raggedright\arraybackslash}p{.37\textwidth}@{}}
\toprule
Established approach & Existing capabilities & Design emphasis in this paper \\
\midrule
Agent evaluation and observability {[}2{]}, {[}3{]} & Behavioral evaluation, execution traces, and recorded assessments & Treat assessments as recorded inputs, not independently sufficient authorization for execution \\
Runtime enforcement and verification {[}4{]}--{[}8{]} & Runtime policy checks, temporal monitoring, and action constraints & Track the validity of a specific prior approval and invalidate its allowance after relevant state changes \\
Access and usage control {[}11{]}--{[}13{]} & Identity, delegation, contextual authorization, and mutable permissions & Separate endpoint permission from the standing associated with a specific recorded business approval \\
Process compliance and transaction control {[}9{]}, {[}10{]}, {[}14{]} & Process obligations, commitment models, and read-set validation & Bind approval history, evidence, decision dependencies, and rule versions to a long-lived execution allowance \\
\bottomrule
\end{tabular}
\end{table*}

\section{The Evaluation--Execution Gap}
\label{sec:gap}

Evaluation estimates risk. It does not decide whether an action should proceed under state, policy, authority, and reversibility.

\subsection{Why the gap is structural}

The gap is structural because approval, case change, and dispatch occur at different times. A point-in-time layer can judge the case at approval, but does not retain the facts, policy, evidence, and authority behind that decision. After a case changes, it cannot compare the old approval with the current case. At dispatch, an enforcement output cannot repair this missing history. A controller must retain the decision basis, re-evaluate it after relevant change, and, unlike probabilistic measurement, make a deterministic, reproducible, inspectable, and defensible intervention.

\emph{Runtime assurance} adjudicates each consequential action against current state before effect. \emph{Runtime control} is our deterministic layer. It maps upstream signals, state, policy, authority, evidence, and reversibility to one intervention.

\subsection{Two running examples}
\label{sec:examples}

\textbf{Claim payout.} The \$4,800 approval is valid under the specified \$5,000 rule. The later \$7,200 estimate invalidates its use for the payment. A dispatch-time threshold check would detect the amount increase; a state-bound allowance also records that the earlier approval was issued under different case conditions and must not be silently reused.

\textbf{Commercial-property submission 8871.} For this illustrative workflow, the declared underwriting rules require a senior sign-off at \$8.2M total insured value and independent retention of every materially relevant guideline exception. A parser records three separately identified guideline exceptions. The agent compresses them into ``one material issue'' without a recorded, authorized consolidation decision and recommends binding. A junior underwriter approves from that summary despite lacking the standing required by the exposure rule. In the sequence below, a senior underwriter then reviews the three separately recorded exceptions, resolves the unsupported summary, and issues a valid approval on that case state. Before binding, an inspection vendor reports a fourth exception. The example separates three questions: whether the agent's summary is supported, whether the reviewer had standing, and whether even a properly issued prior approval would remain valid after the new inspection. Table~\ref{tab:workflow} traces the submission across these positions, contrasting what the control layer retains with what a point-in-time view sees.

\begin{table*}[!t]
\caption{A state-bound decision across the 8871 underwriting workflow: what the control layer retains versus what a point-in-time view sees.}
\label{tab:workflow}
\centering
\scriptsize
\setlength{\tabcolsep}{3pt}
\renewcommand{\arraystretch}{1.07}
\begin{tabular}{@{}>{\raggedright\arraybackslash}p{.035\textwidth} >{\raggedright\arraybackslash}p{.19\textwidth} >{\raggedright\arraybackslash}p{.255\textwidth} >{\raggedright\arraybackslash}p{.205\textwidth} >{\raggedright\arraybackslash}p{.26\textwidth}@{}}
\toprule
Step & Workflow event & State retained by control layer & Point-in-time view & Runtime-control consequence \\
\midrule
t0 & Parser extracts three guideline exceptions & Three facts enter the record with source provenance & Extraction completes & Facts are available to future obligations \\
t1 & Agent writes ``one material issue; recommend bind'' & Narrative retained as an assertion, not a replacement for three facts & Summary can appear plausible in isolation & Recorded contradiction can trigger monitoring or escalation (F4) \\
t2 & Junior underwriter approves & Approval records actor, policy version, and current state & Review exists; user has tool access & Authority fails for \$8.2M exposure; route to senior authority (F1) \\
t3 & Senior reviews the three exceptions, resolves the summary, and approves & Valid approval supports an allowance bound to that case state & Approval is now present & Binding may proceed only while allowance dependencies remain unchanged \\
t4 & Inspection vendor reports a fourth exception & New fact changes variable read by binding obligations & Earlier senior approval still exists & Old allowance invalidated (F2) \\
t5 & Agent proposes \texttt{bind\_policy} & Current state includes four exceptions and allowance bound at t3 & Tool guard sees valid call plus approval & Re-adjudicate before dispatch; withhold binding until conditions are met \\
\bottomrule
\end{tabular}
\end{table*}

\section{A Taxonomy of Operational Workflow Failures}
\label{sec:taxonomy}

The examples above show stale decisions, missing standing, and unsupported assessments. The taxonomy names these and related implementation-independent failures in obligations, authority, evidence, and commitments. Later sections map each mode to detection and response.

\subsection{Workflows, commitments, and failure}

An \textbf{operational workflow} is a finite sequence of actor-performed steps with inputs, obligations, invariants, and successors. Obligations require standing and create \textbf{commitments} that may be reversible, compensable, or irreversible. Cases can change while open. An \textbf{operational failure} violates an invariant by action, commitment, or lack of progress, even without malfunction.

\subsection{Classification}

Each mode has a locus, surfacing horizon, and commitment reversibility. The locus can be evidence, authority, temporal validity, rules, record truthfulness, or boundary. It may surface during a step, later, or after an external trigger. Handoffs can delay detection because the next actor receives a summary. Reversibility sets response strictness.

\subsection{The failure modes}
\label{sec:modes}

\textbf{F1: Unauthorized commitment.} An approval lacks state-dependent business standing, despite tool permission. On 8871, the junior underwriter lacks senior standing. It may surface at audit and is irreversible.

\textbf{F2: Stale-decision execution.} A valid decision is invalidated by later case change before execution. Agents may execute against it rather than re-read the file. The claims payout is F2.

\textbf{F3: Incomplete-evidence discharge.} A step proceeds before policy-required facts are present. Agents need absence enforced as a block. A bind on 8871 before a signed statement of values is F3. It can surface at claim time and be irreversible.

\textbf{F4: Ungrounded discharge.} Facts and authority may exist, while the assessment does not follow from facts. The layer detects recorded contradiction, missing support, or a groundedness threshold. It cannot decide whether a supported assessment is substantively wrong (Section~\ref{sec:discussion}). On 8871, \emph{one material issue} contradicts three exceptions. The layer preserves the recorded facts and treats the assessment as a separate assertion.

\textbf{F5: Prohibited commitment.} No actor may make the commitment, so escalation cannot cure it. A bind in an unfiled class is F5 and may surface at regulatory examination.

\textbf{F6: Inconsistent-rule adjudication.} The governing policy version or its case applicability is not determinate. Effective dates, transitions, and case conditions matter. A transition assigning a case to neither version is F6.

\textbf{F7: Narrative capture.} A summary or recommendation replaces the facts it describes, or is stored as if it were a fact. If 8871 kept only \emph{one material issue} and dropped the three extracted exceptions, the controller could no longer compare the summary with the facts. When information enters the system, source typing keeps facts, assertions, and decisions separate.

\textbf{F8: Governance bypass.} A commitment uses an ungoverned database edit, payment rail, or override. No layer can intervene on what it never sees. F8 therefore lies outside the model's prevention guarantee.

\begin{table*}[t]\centering\footnotesize
\renewcommand{\arraystretch}{1.3}
\caption{Operational failure modes and their runtime images.}
\label{tab:modes}
\begin{tabularx}{\textwidth}{@{}L L L L L@{}}
\toprule
\textbf{Mode} & \textbf{Locus} & \textbf{Surfacing} & \textbf{Commitment} & \textbf{Intervention} \\
\midrule
F1 Unauthorized commitment & authority & audit & irreversible & \kw{escalate} \\
F2 Stale-decision execution & temporal validity & downstream / never & irreversible & \kw{rollback} \\
F3 Incomplete-evidence discharge & evidence & claim time & irreversible & \kw{require\_evidence} \\
F4 Ungrounded discharge & assessment & downstream / audit & varies & \kw{monitor} / \kw{escalate} \\
F5 Prohibited commitment & governing rules & regulatory exam & irreversible & \kw{deny} \\
F6 Inconsistent-rule adjudication & rule identity & audit & varies & \kw{pause} \\
F7 Narrative capture & veracity of record & pervasive & varies & structural (Sec.~\ref{sec:arch}) \\
F8 Governance bypass & process boundary & never & irreversible & excluded (Sec.~\ref{sec:arch}) \\
\bottomrule
\end{tabularx}
\end{table*}

\subsection{Two tiers, exhaustiveness, and scope}
\label{sec:exhaustive}

F1--F6 are \emph{adjudication failures}. The controller can see the proposed action and case record, then choose a response. F7--F8 are \emph{structural failures}. In F7, the record needed for a decision has been replaced or made unreliable. In F8, the action happens outside the controller's view.

The taxonomy is exhaustive only for the workflow step described in this paper. Within that boundary, a failure concerns missing input, an unsupported assessment, authority, an old decision, policy, record integrity, or a bypassed boundary. It does not cover infrastructure faults, attacks, or obligations that policy never encoded. An attack may still appear as F1--F7 if it changes a visible record, authority, or decision.

\section{The Runtime Assurance Model}
\label{sec:model}

F1--F6 are adjudicated when a governed action reaches the controller. A governed action is consequential, registered in the manifest, and routed through the controller. F2 can arise after a later event. The controller rebuilds the case, evaluates the action against current records, and returns one intervention. This section defines that protocol and its limited guarantee.

\subsection{The assurance decision}
\label{sec:decision}

The assurance decision, called D, answers one question: may this action proceed in the case as it stands now? D checks the proposed action against current case facts, applicable policy, the actor's authority, evidence, and reversibility. It returns one intervention. The controller records its reasons and returns the outcome to the agentic workflow, whose registered action connector enforces it.
\label{eq:decision}

Consider the claims payout introduced earlier. The agent approved \$4{,}800 under a rule that permits approval below \$5{,}000 without senior review. Before sending payment, the controller rebuilds the case from its initial state and ordered event log. The log is append-only: new events are added, and earlier events remain available. Reconstruction uses recorded events and preserved procedures, without a new model call or external lookup.
\label{eq:state-replay}

D needs more than the amount. It checks the approval, the actor's authority, required evidence, the governing policy version, and whether payment can be reversed or compensated. If all requirements hold, it returns \kw{allow}. The controller stores the action, case, actor, log position, outcome, and every input used. These inputs form the \emph{decision read set}. Each entry identifies the fact or governing artifact, its observed value, and its applicable version. This includes thresholds, authority and registry rules, prior decisions, deadlines, and effective dates. D uses only the action and this complete set of recorded inputs.

Now the revised estimate of \$7{,}200 arrives before payment. Ingestion adds it to the log. The amount is an input to the earlier decision, so the controller marks that decision stale and runs D again. The new amount requires senior review. The controller therefore returns \kw{escalate}; the agentic workflow holds payment and routes the claim for review. It records the senior review when it arrives and checks all current requirements again before returning an allowance.
\label{eq:freshness}

An unrelated event leaves the decision standing. A change to a recorded input, governing version, or version-applicability condition requires another decision. An expiry must enter the log before a later action that depends on it. Restoring an earlier value cannot revive the old decision, because the intervening change remains in the log. A digest identifies the decision for audit. The recorded inputs and subsequent events establish whether it remains current.

This separates an \emph{approval}, a recorded decision by a person or an authorized agent, from an \emph{allowance}, the controller's permission to proceed. The runtime returns its outcome and allowance to the agentic workflow. Immediately before its registered action connector invokes a target, it checks with the controller that the allowance remains current. A stale decision triggers re-evaluation and the connector holds the action. The final check and invocation must be serialized with incoming events, so no relevant change can enter between them.
\label{eq:dispatch}

D returns one of seven interventions. \kw{allow} permits execution. \kw{monitor} permits reversible or compensable work under observation. \kw{require\_evidence} names missing facts. \kw{escalate} routes to a reviewer. \kw{pause} waits for a condition to be resolved. \kw{deny} refuses the action. After an earlier reversible or compensable effect has landed, \kw{rollback} directs recovery. It does not authorize the original action. Table~\ref{tab:grid} maps deficiencies and reversibility to these outcomes.

\subsection{The inputs}
\label{sec:defs}

\textbf{Case state and action.} Case state contains the recorded facts, documents, decisions, and signals. In the payout example, it includes both estimates and the approval history. The action registry identifies the proposed operation and target as consequential.

\textbf{Policy and authority.} Versioned policy contains conditions, authority, evidence, and permitted actions. The controller selects a version only when its date, transition rule, case scope, and action scope fit. Otherwise the case is F6. An API right is not business standing. The authority check asks whether this actor may make this action for this case. It can depend on exposure, jurisdiction, product, delegation, and duty separation. Thus 8871's junior underwriter has access but not standing.

\textbf{Evidence.} Completeness checks required facts and detects F3. Provenance records source. Consistency compares assertions with recorded facts and detects F4 by contradiction or missing support. The layer cannot decide substantive correctness, so it routes that question.

\begin{table}[t]\centering\footnotesize
\renewcommand{\arraystretch}{1.25}
\caption{Evidence properties the layer can establish.}
\label{tab:evidence}
\begin{tabularx}{\columnwidth}{@{}l L@{}}
\toprule
\textbf{Property} & \textbf{Runtime layer} \\
\midrule
Completeness & establishes presence against declared requirements (F3) \\
Provenance & establishes source attribution from ingestion typing \\
Consistency & establishes agreement against recorded facts (F4) \\
Substantive correctness & cannot establish. Escalates \\
\bottomrule
\end{tabularx}
\end{table}

\textbf{Reversibility} is declared in the action registry as reversible, compensable, or irreversible. It detects no mode. It sets strictness, so doubt that permits monitoring for a reversible action can require escalation for an irreversible one.

\subsection{Properties}
\label{sec:properties}

\textbf{Determinism.} The same log prefix, action, decision-read-set values, and pinned governance-artifact versions give the same state and intervention. Signals enter only as logged values and thresholds.

\textbf{Separation of adjudication from extraction.} Parsers, evaluators, and people resolve unstructured material first. The decision makes no hidden model call.

\textbf{Risk-to-intervention translation.} Groundedness concern maps to \kw{monitor} for reversible work and \kw{escalate} for irreversible work.

\textbf{Reversibility-conditioned strictness.} Residual uncertainty needs stronger control before an irreversible commitment.

\textbf{Temporal re-evaluation.} A change to a recorded input, pinned version, or version-applicability condition triggers a new decision. This is why the \$7{,}200 revision stops payment under the earlier allowance.

\section{Reference Architecture}
\label{sec:arch}

The architecture comprises a manifest, interception boundary, typed log, reconstruction, engine, and controller. Figure~\ref{fig:architecture} shows the agentic layer receiving claim documents and running the agent workflow. The agent sends evidence references and a proposed action to runtime Ingest; the runtime records and adjudicates them, then returns an outcome to the agentic layer. In submission 8871, the inspection vendor's fourth exception arrives through the case workflow and is forwarded to Ingest. Because it changes a dependency of the pending binding decision, the runtime invalidates the old allowance and returns a hold or escalation. If an action is allowed, the registered action connector invokes it only after a current-allowance check. The runtime returns authorization to the agentic layer; it does not directly send the action to the enterprise target. Execution results return to the agentic layer and are forwarded to Ingest as logged events.

\begin{figure*}[t]
\centering
\resizebox{\textwidth}{!}{%
\begin{tikzpicture}[
  x=1cm,y=1cm,font=\sffamily\footnotesize,
  box/.style={draw=black,line width=.6pt,rounded corners=2pt,
    fill=none,align=center,inner sep=4pt},
  stage/.style={box,minimum width=3.4cm,minimum height=1.9cm},
  flow/.style={-{Stealth[length=2mm]},line width=.65pt,draw=black},
  feedback/.style={flow,dashed},
  reviewflow/.style={{Stealth[length=2mm]}-{Stealth[length=2mm]},line width=.65pt,draw=black,dashed},
  tag/.style={font=\sffamily\scriptsize,fill=none,inner sep=1.5pt,align=center}
]
\draw[rounded corners=3pt,draw=black,line width=.6pt,fill=none]
  (0,2.75) rectangle (18.2,9.8);
\node[anchor=north,font=\sffamily\bfseries\small] at (8.65,9.7)
  {Runtime control layer};

\node[stage] (ingest) at (2.0,8.0)
  {\textbf{1. Ingest}\\[3pt]Type and record\\evidence, proposals,\\updates};
\node[stage] (reconstruct) at (6.3,8.0)
  {\textbf{2. Reconstruct}\\[3pt]Replay ordered log\\Build current\\case state};
\node[stage] (adjudicate) at (10.6,8.0)
  {\textbf{3. Adjudicate}\\[3pt]Apply policy, evidence,\\authority and\\dependencies};
\node[stage] (control) at (14.9,8.0)
  {\textbf{4. Control}\\[3pt]Return outcome\\to agentic layer\\Validate allowance\\at tool call};

\node[stage] (log) at (6.3,5.2)
  {\textbf{Typed log}\\[3pt]Documents, assertions,\\proposals, decisions,\\updates};
\node[stage] (policy) at (10.6,5.2)
  {\textbf{Governance inputs}\\[3pt]Claim rules, evidence needs,\\agent authority,\\action registry};
\node[stage] (temporal) at (14.9,5.2)
  {\textbf{Temporal validity}\\[3pt]Track allowance dependencies\\Relevant change invalidates it};

\draw[flow] (reconstruct) -- (adjudicate);
\draw[flow] (adjudicate) -- (control);
\draw[flow] (log) -- node[tag,right]{replay} (reconstruct);
\coordinate (pin) at ([xshift=-.6cm]policy.north);
\draw[flow] (pin) -- (pin |- adjudicate.south);

\coordinate (appendOut) at ([xshift=.6cm]ingest.south);
\draw[flow] (appendOut) |- node[tag,above,pos=.75]{append} (log.west);

\coordinate (logOut) at ([xshift=.6cm]log.south);
\coordinate (updateLaneStart) at (logOut |- 0,3.4);
\coordinate (updateLaneEnd) at (14.9,3.4);
\draw[feedback] (logOut) |- (updateLaneStart)
  -- node[tag,above,pos=.8]{relevant logged update} (updateLaneEnd)
  -- (temporal.south);

\coordinate (validityCheck) at ([xshift=.25cm]temporal.north);
\coordinate (controlCheck) at ([xshift=.25cm]control.south);
\draw[feedback] (validityCheck) -- node[tag,right,pos=.52]{validity check} (controlCheck);

\coordinate (invalidateOut) at ([xshift=-.75cm]temporal.north);
\coordinate (recheckLane) at (10.6,6.55);
\coordinate (adjRecheck) at (10.6,7.05);
\draw[feedback] (invalidateOut) -- (invalidateOut |- 0,6.55)
  -- node[tag,below,pos=.52]{re-adjudicate} (recheckLane)
  -- (adjRecheck);

\coordinate (recOut) at ([yshift=-.35cm]control.east);
\coordinate (logIn) at ([xshift=-.6cm]log.south);
\draw[feedback] (recOut) -| (17.15,3.1) -- (logIn |- 0,3.1)
  node[tag,below,pos=.5]{record decision / outcome} -- (logIn);

\node[box,minimum width=22.0cm,minimum height=1.4cm] (agent) at (10.6,1.2)
  {\textbf{Agentic layer}\\[3pt]
   Receives portal documents and claim updates; agent proposes actions with evidence references.\\
   Workflow executes consequential actions only with current runtime authorization.};

\coordinate (ingIn) at ([xshift=-.6cm]ingest.south);
\draw[flow] (ingIn |- agent.north) -- (ingIn)
  node[tag,rotate=90,pos=.58,anchor=south]{evidence + proposal + updates};

\coordinate (outOut) at ([yshift=.35cm]control.east);
\draw[feedback] (outOut) -| (17.75,1.9)
  node[tag,rotate=90,pos=.75,anchor=north]{allow / hold / escalate / deny};

\node[box,minimum width=3.2cm,minimum height=1.2cm] (human) at (20.3,7.0)
  {\textbf{Human reviewer}\\[3pt]Receives escalated case\\Returns recorded decision};
\draw[reviewflow] (human.south |- agent.north) -- (human.south)
  node[tag,rotate=90,pos=.5,anchor=north]{review request / decision};

\node[box,minimum width=8.4cm,minimum height=1.1cm] (enterprise) at (10.6,-1.4)
  {\textbf{Enterprise systems}\\[3pt]Payment / binding / issuance targets};
\draw[flow] (agent.south) -- node[tag,right]{agentic tool call\\requires current allowance} (enterprise.north);
\draw[feedback] (enterprise.west) -| (-1.0,1.2) -- (agent.west);
\node[tag,rotate=90,anchor=south] at (-1.1,-.1) {execution result};

\end{tikzpicture}
}
\caption{Reference architecture for a runtime-controlled agentic workflow. Solid arrows show case inputs, proposals, processing, and agentic tool execution. Dashed arrows show runtime outcomes, review, decision recording, and invalidation. The runtime returns authorization to the agentic layer; its action connector invokes a target only with a current allowance.}
\label{fig:architecture}
\end{figure*}

\subsection{Interception}

The manifest registers each record mutation, its reversibility, and external binding. Running clearance is internal and reversible. Binding is external and irreversible. An engineer, not the agent, owns the manifest.

Every consequential tool invocation crosses the runtime authorization gate. The model agent cannot call a target directly; the registered action connector holds execution credentials and must obtain a current allowance from the runtime before invoking a consequential action. The agentic layer owns the workflow and initiates the invocation after authorization; the runtime returns allow, hold, escalation, or denial rather than sending the action itself. Alternate clients and APIs cannot bypass this topology. It treats F8.

Deployment enumerates mutations. Action-connector retries re-adjudicate with idempotency keys. Workers are governed or declared ungoverned. Privileged access is break-glass and logged. Consoles and legacy clients call the layer or fall outside the manifest. Third parties require notification. Non-layer writes are detectable. The guarantee covers the registered surface.

\subsection{The ledgers}
\label{sec:ingestion}

The append-only ordered log records results, decisions, outputs, signals, notifications, and policy changes. Typing records source, not truth. Facts are accountable observations, assertions are agent characterizations, and signals are evaluator judgements. Parser errors remain attributable. On 8871, three extracted exceptions are facts and \emph{one material issue, recommend bind} is an assertion. They coexist. Typing prevents F7 and separates extraction.

Human decisions record actor, claimed authority, policy version, and a rendered fingerprint of the decision record. The junior 8871 approval has a decision read set containing the three exceptions. A fourth exception changes that read set, making freshness and retrospective checks mechanical.

\subsection{State reconstruction}

State is rebuilt from the initial case state and ordered event log, without model or external reads. Correct replicas reconstruct the same state. Assertions do not change facts. Each adjudication records one log position for exact replay.

\subsection{The assurance engine}
\label{sec:engine}

The rules engine takes applicable obligations, thresholded signals, the business-authority result, and evidence absence. Policy, thresholds, and authority are immutable. On 8871, access passes but standing fails.

Each applicable obligation votes. The engine records all votes but returns the most restrictive one (Section~\ref{sec:severity}). A new obligation can raise, not lower, an outcome. Its record pins the action, decision read set, versions, votes, outcome, and log position for audit replay.

\subsection{The controller and temporal re-evaluation}
\label{sec:controller}

The controller records its outcome and returns it to the agentic layer. Escalation cites facts, contrary assertions, mismatched approvals, and obligations; the agentic workflow routes the case to a reviewer and returns the review decision as a logged event. The registered action connector checks the allowance again immediately before a consequential invocation. Irreversible actions are synchronous, so no approval is terminal.

A pending decision record is current only while none of its decision dependencies changes. Restoring a value cannot repair a stale record. The controller re-adjudicates affected actions and returns the updated outcome to the agentic layer. On 8871, a fourth exception stales the pending binding allowance before the action connector can invoke binding.

\subsection{Formal properties}
\label{sec:assumptions}

The following are architectural arguments for governed actions. They use the controller protocol in Section~\ref{sec:decision} and the assumptions below. They do not establish input truth, endpoint exactly-once effect, or behavior outside the governance boundary.

\begin{assumption}[Complete interception]
\label{as:intercept}
Every registered system-of-record mutation and externally binding action passes through the governance layer by credential topology, not convention.
\end{assumption}

\begin{assumption}[Authoritative event ingestion]
\label{as:state}
Every decision-relevant event, including expiry, enters the log with its affected dependency or version-applicability identity before a later action using it dispatches.
\end{assumption}

\begin{assumption}[Immutable governance artifacts]
\label{as:immutable}
The log is append-only and ordered. Its existing entries, initial case state, decision records, stored values, and policy, threshold, authority, registry, replay, and rule versions remain unchanged and available.
\end{assumption}

\begin{assumption}[Defined dispatch semantics]
\label{as:atomic}
For one action, the final validity check, outcome selection, and dispatch are serialized with additions to the log.
\end{assumption}

\begin{theorem}[Dispatch invariant]
\label{thm:validity}
Under Assumptions~\ref{as:intercept}--\ref{as:atomic}, every governed irreversible or externally binding action has a current decision record that permits its dispatch. The claim concerns governance-boundary dispatch. It does not establish execution, exactly-once effect, or atomic external logging.
\end{theorem}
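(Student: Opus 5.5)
The argument traces a single governed irreversible or externally binding action $a$ from proposal to dispatch and shows that, at the moment of dispatch, some decision record both permits $a$ and is current. The proof is a chain of implications, one assumption per link, combined with the controller protocol of Section~\ref{sec:decision}.

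\textbf{Step 1: dispatch goes through the connector.} By Assumption~\ref{as:intercept}, $a$ can reach its target only through the registered action connector. That connector holds the execution credentials, and by protocol it invokes a target only after obtaining an allowance and rechecking it. So every dispatch of $a$ is preceded by a final validity check at some log position $p$. Because $a$ is irreversible or externally binding, the controller handles it synchronously, so no earlier approval counts as terminal for it.

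\textbf{Step 2: the check certifies a permitting, current record at $p$.} The check succeeds only if the controller returns a decision record $r$ with outcome \kw{allow}, or \kw{monitor} where the registry permits it. Since $a$ is irreversible, \kw{monitor} is excluded by reversibility-conditioned strictness, which leaves \kw{allow}. The record $r$ must also be current at $p$: no logged event between $r$'s own log position and $p$ may change an entry of its decision read set, its pinned governance version, or a version-applicability condition.

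The argument for currency has two parts.
\begin{itemize}
\item Assumption~\ref{as:immutable} guarantees that $r$'s read set and pinned versions still exist, unchanged, so they can be compared against the log.
\item Assumption~\ref{as:state} guarantees that every decision-relevant change, including an expiry, that precedes $a$'s dispatch is in the log by $p$, tagged with the dependency it affects. Temporal re-evaluation therefore marks $r$ stale whenever such a change has occurred.
\end{itemize}
Restoring an earlier value cannot revive $r$, because the intervening event stays in the append-only log. If $r$ is stale, the connector holds $a$ and re-adjudication runs. Determinism, given the fixed log prefix and pinned versions, means the new outcome is a function of current state alone. A new permitting record therefore exists only if current requirements hold, and in that case it becomes the $r$ for the next check.

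\textbf{Step 3 (main obstacle): no relevant change slips in between check and dispatch.} The delicate part is the window between the check at $p$ and the actual dispatch. Here I would use Assumption~\ref{as:atomic}: the check, the outcome selection and the dispatch are serialized with log appends, so no event lands strictly inside this critical section. Any relevant event is ordered either before $p$, where Step 2 has already handled it, or after dispatch, which falls outside the claim. The record $r$ is therefore current at the dispatch instant itself, not merely at $p$.

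To close the argument, I would state explicitly that nothing is claimed about the target's execution, exactly-once effects, or atomic external logging. These fall outside the connector boundary covered by Assumption~\ref{as:intercept}, which matches the caveats in the theorem. I would also note that retries are re-adjudicated under idempotency keys, so each retry is itself a dispatch covered by Steps 1–3.
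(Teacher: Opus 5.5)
Your proof is correct and follows the paper's argument essentially step for step: connector mediation under complete interception, ingestion plus the append-only staleness rule (including the restored-value case) to certify a current permitting record at the final check, and serialization to close the window between check and invocation. One small slip: you need not, and for an externally binding but compensable action cannot, rule out \kw{monitor} as the permitting outcome, because your ``since $a$ is irreversible'' covers only one disjunct and the theorem requires only some current record that permits dispatch.
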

\begin{IEEEproof}
The registered action connector sends each proposed consequential action to the controller and waits for its outcome. Ingestion records every prior relevant event. Under the rule in Section~\ref{sec:decision}, a relevant change makes the earlier record stale even if values later return. The connector invokes the target only after the controller returns permission and the final validity check confirms a current record. Serialization excludes a relevant event between that check and invocation. Thus each governed dispatch satisfies the invariant.
\end{IEEEproof}

\begin{theorem}[Deterministic adjudication]
\label{thm:determinism}
For a fixed log prefix, action, decision-read-set values, and pinned governance-artifact versions, the engine returns the same intervention.
\end{theorem}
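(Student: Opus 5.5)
The plan is to show that the engine computes a function whose only arguments are the ones the statement fixes. I would proceed in three steps, moving along the architecture's pipeline: reconstruction, then voting, then selection of the outcome.

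\textbf{Step 1: reconstruction.} Fix a log prefix. By Assumption~\ref{as:immutable}, the initial case state, the existing log entries and the pinned replay procedure are unchanged and available. Reconstruction (Section~\ref{sec:arch}) replays the ordered events with preserved procedures and makes no model call or external read. By induction on the length of the prefix, each replay step is a fixed procedure applied to the previous state and a fixed event. Hence two replays of the same prefix produce the same case state. Typing does not disturb this: assertions and signals enter only as logged values and never overwrite facts.

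\textbf{Step 2: obligation votes.} With the state fixed, I would invoke the separation of adjudication from extraction and the rule in Section~\ref{sec:decision} that D uses only the action and its decision read set. The engine's inputs are the following:
\begin{itemize}
\item the applicable obligations, which come from the version selection rule applied to pinned policy versions and logged effective-date facts;
\item the thresholded signals, which are logged values compared against pinned thresholds;
\item the business-authority result, computed from the pinned authority registry;
\item evidence absence, computed against the declared requirements.
\end{itemize}
Each of these is a fixed rule evaluated on fixed data. Assumption~\ref{as:immutable} guarantees that the pinned policy, threshold, authority, registry and rule versions are identical across runs. So the multiset of votes is determined.

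\textbf{Step 3: outcome selection.} The engine returns the most restrictive vote under the severity order. Provided that order is a total order on the seven interventions, the maximum of a fixed multiset is unique and does not depend on the order in which votes are evaluated. That gives the claim.

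The main obstacle is Step 3 together with the boundary of what counts as an input. Two things must be excluded. First, severity ties must not be broken nondeterministically. I would handle this by requiring the order in Section~\ref{sec:severity} to be total, or by adding a fixed tie-break recorded with the rule version. Second, no hidden source may slip into the computation, such as wall-clock time, randomness, or live lookups. I would handle this by noting that the protocol requires expiry and other time conditions to enter as logged events (Assumption~\ref{as:state}) rather than being read from a clock. If those two points hold, the argument reduces to composing deterministic functions.
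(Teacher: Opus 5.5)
Your proposal is correct and follows essentially the paper's route: replay of the fixed prefix yields the same state, Assumption~\ref{as:immutable} preserves the recorded inputs and procedures that D uses, and D makes no model or external call, so its fixed rules return the same intervention. Your explicit treatment of the most-restrictive selection only unpacks the paper's ``fixed rules'' (the paper declares a total restrictiveness order in Section~\ref{sec:severity}), and you do not need Assumption~\ref{as:state} to exclude clock reads, since the protocol rule that D uses only the action and its recorded inputs already does that.
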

\begin{IEEEproof}
Replay reconstructs the same state. Assumption~\ref{as:immutable} preserves the recorded inputs and procedures used by D. D reads those inputs without a new model or external call. Its fixed rules therefore return the same intervention.
\end{IEEEproof}

\begin{theorem}[Temporal invalidation]
\label{thm:invalidation}
A relevant event after a decision makes its record stale permanently, even if values return. Without such an event, the record remains current.
\end{theorem}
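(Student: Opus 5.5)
The plan is to make ``current'' precise as a predicate on a decision record and a log prefix, and then prove both halves of Theorem~\ref{thm:invalidation} from that predicate together with Assumptions~\ref{as:state} and~\ref{as:immutable}. A decision record $r$ pins a log position $p_r$ and a decision read set $R_r$. Each entry of $R_r$ is a pair of an identity and a version: a fact or artifact identifier, its observed value, its applicable version, and any version-applicability condition. An event $e$ at position $q>p_r$ is \emph{relevant} to $r$ when the dependency or version-applicability identity it carries (which Assumption~\ref{as:state} requires every decision-relevant event to carry) meets an identity in $R_r$. We then define $r$ to be current at a prefix of length $n$ exactly when no relevant event occurs at any position in $(p_r,n]$. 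This is the rule stated in Section~\ref{sec:decision}, written as an explicit predicate over the prefix, and it refers to log positions rather than to reconstructed values.

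For the first half, suppose a relevant event $e$ sits at position $q>p_r$. By Assumption~\ref{as:immutable} the log is append-only, and neither its existing entries nor $r$ (including $p_r$ and $R_r$) can change. So for every later prefix length $n\ge q$, the interval $(p_r,n]$ still contains $q$, and $r$ is not current. Now let later events restore the original value, so that state reconstruction at some $n$ yields, for every identity in $R_r$, the same value recorded in $r$. Staleness is still not revived, because the predicate depends only on whether a relevant event is present in the interval, and $e$ remains present. The step I expect to need care is showing that no alternative mechanism can reset currency. I would handle it by noting that the only way to obtain a fresh current record is a new adjudication. That adjudication produces a new record $r'$ with $p_{r'}\ge q$, and $r$ itself stays stale, which is exactly what ``permanently'' claims.

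For the second half, suppose no relevant event occurs after $p_r$. Then the interval contains only unrelated events, so $r$ is current by definition. To show this agrees with the intended semantics, I would appeal to Theorem~\ref{thm:determinism}. Unrelated events leave every value, version, and applicability condition in $R_r$ unchanged, and D reads only the action and $R_r$, so re-running D would return the same intervention. Declaring $r$ current therefore loses nothing.

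The main obstacle is making sure ``relevant'' is never missed, that is, that a real change to a dependency cannot enter the log unlabelled and leave $r$ looking current. This part is not provable inside the model. I would discharge it explicitly by citing Assumption~\ref{as:state}: every decision-relevant event, including expiry and version-applicability changes, enters the log with its affected identity before any dependent dispatch. I would state plainly that the second half holds relative to this assumption.
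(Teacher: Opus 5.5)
Your proposal is correct and follows essentially the paper's argument. The validity rule examines all relevant logged events since the decision, and append-only history (Assumption~\ref{as:immutable}) retains a reversed change, so the record cannot become current again; with no such event the rule leaves the record current. You simply make that rule explicit as a predicate over positions in $(p_r,n]$.

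Your appeals to Assumption~\ref{as:state} and Theorem~\ref{thm:determinism} are optional extras rather than proof steps. The second half follows directly from your predicate, because currency is defined relative to logged events; the ingestion assumption is what Theorem~\ref{thm:validity} needs, not this theorem. Also, Theorem~\ref{thm:determinism} assumes a fixed log prefix, so your sanity check really rests on D reading only the action and its read set.
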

\begin{IEEEproof}
The validity rule examines all relevant events since the decision. Append-only history retains a change after reversal, so the old record cannot become current again. Without a relevant event, the rule leaves it current.
\end{IEEEproof}

\begin{proposition}[Audit replay]
\label{prop:replay}
Under Assumption~\ref{as:immutable}, any past adjudication can be reconstructed without the agent or model.
\end{proposition}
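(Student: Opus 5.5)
The plan is to establish Proposition~\ref{prop:replay} by exhibiting a reconstruction procedure that uses only artifacts preserved under Assumption~\ref{as:immutable}, and then invoking Theorem~\ref{thm:determinism} to show that the procedure reproduces the recorded intervention. First, fix a past adjudication and take its decision record, which (Section~\ref{sec:engine}) pins the action, the decision read set, the governing versions, the obligation votes, the outcome, and the log position. Assumption~\ref{as:immutable} guarantees that this record, the initial case state, and every log entry up to the pinned position are still present and unchanged. It also guarantees that the policy, threshold, authority, registry, replay, and rule versions named in the record remain available.

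Second, rebuild the case state by replaying the ordered log prefix from the initial state, using the preserved replay procedure (Section~\ref{sec:decision} and the state reconstruction subsection). The key point is that reconstruction consumes only recorded events. Model outputs and evaluator signals enter the log as typed, recorded values, so replay needs neither the agent nor any model call; it simply reads these values. Assertions do not alter facts, so the typing recorded at ingestion fixes the rebuilt state.

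Third, apply D to the pinned action over the rebuilt state and the pinned artifact versions. By the separation of adjudication from extraction, D makes no hidden model or external call. Theorem~\ref{thm:determinism} then yields the same intervention, and the per-obligation votes are recovered the same way. As an integrity check, compare the recomputed read-set values, and the rendered fingerprint or digest of any human decision, against those stored in the record.

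The main obstacle is the second step. It requires that every input D used is either in the log prefix or a pinned artifact, with nothing read from mutable external state or a live model. I would handle this by appealing to the definition of the decision read set as the \emph{complete} set of recorded inputs, together with the ingestion typing that logs signals as values. I would also state explicitly that the proposition reconstructs the adjudication, not the truth of its inputs, matching the scope caveat that precedes Assumption~\ref{as:intercept}.
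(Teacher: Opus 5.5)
Your proposal is correct and follows the paper's own argument: the retained decision record supplies the log position, action, read set, stored values, versions, votes, and outcome, replaying the preserved log prefix rebuilds the state without the agent or any model call, and Theorem~\ref{thm:determinism} then reproduces the intervention. Your integrity check against stored fingerprints and your explicit scope caveat (the proposition reconstructs the adjudication, not the truth of its inputs) add detail but do not change the route.
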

\begin{IEEEproof}
The retained record supplies the log position, action, read set, stored values, versions, votes, and outcome. Replay reconstructs the state, and Theorem~\ref{thm:determinism} reproduces the result.
\end{IEEEproof}

Authorization validity is not exactly-once execution. A lost acknowledgement can permit a valid retry to commit twice. The controller records it as pending with an idempotency key. Transactional endpoints append and effect atomically. Conditional writes reject old tokens. Weaker endpoints expose duplicates at reconciliation. Optimistic concurrency supplies compare-and-swap~\cite{kung1981occ}. The first two grades preserve the governed-dispatch guarantee. Input truth remains outside the guarantee.

\subsection{Component necessity}

Removing interception readmits F8. Removing typing readmits F7. Removing reconstruction or re-evaluation readmits F2. Removing authority, evidence, or policy versioning readmits F1/F5, F3, or F6. Admitting raw signals loses F4 detection or determinism. These components provide the stated coverage in this architecture.

\section{From Failures to Intervention}\label{from-failures-to-intervention}
\label{sec:interventions}

\subsection{Prospective intervention versus remediation}\label{prospective-intervention-versus-remediation}
\label{sec:severity}

Table~\ref{tab:grid} maps deficiency and reversibility to the typical prospective response. Six prospective outcomes address action proposals: allow, monitor, require\_evidence, escalate, pause, and deny. Rollback or compensation is a separate post-effect remediation operation. This removes the misleading implication that undoing an effect is merely a more restrictive form of denying a proposed action. In the accompanying implementation, rollback appears as a seventh entry in the same intervention list; the paper separates it because undoing a landed effect is not a stronger form of refusing a proposal.

Where multiple deficiencies coexist, the engine logs each one and applies the declared resolution policy---in the evaluated instance, a total restrictiveness order over the prospective outcomes. A landed effect may be reversed or compensated under a separate, authorized recovery policy. An irreversible effect cannot be rolled back; it can only be surfaced, reconciled, or remedied by another permissible process.

\begin{table}[!t]
\caption{Typical prospective response by deficiency and reversibility.}
\label{tab:grid}
\centering
\scriptsize
\setlength{\tabcolsep}{3pt}
\renewcommand{\arraystretch}{1.07}
\begin{tabular}{@{}>{\raggedright\arraybackslash}p{.27\columnwidth} >{\raggedright\arraybackslash}p{.315\columnwidth} >{\raggedright\arraybackslash}p{.31\columnwidth}@{}}
\toprule
Deficiency & Reversible or compensable & Irreversible \\
\midrule
None & allow & allow \\
Residual doubt (F4) & monitor or escalate & escalate \\
Missing evidence (F3) & require evidence & require evidence \\
Insufficient standing (F1) & escalate & escalate \\
Rule unclear (F6) & pause & pause \\
Action prohibited (F5) & deny & deny \\
Invalid prior allowance (F2) & \textbf{pause; re-adjudicate} & \textbf{pause; re-adjudicate} \\
\bottomrule
\end{tabular}
\end{table}

\section{Illustrative Evaluation}\label{illustrative-evaluation}
\label{sec:evaluation}

\subsection{Setup and scope}\label{setup-and-scope}

We evaluate the specification with a deterministic replay harness. A \emph{case} is a hand-written, ordered event log for one workflow, containing recorded facts, documents, approvals, evaluator signals, and agent proposals, together with the governing policy and threshold versions and a declared expected outcome. The harness replays each log through a controller configuration event by event and records every adjudication and dispatch decision. The library contains 42 such author-written cases: 27 include specified failures and 15 are designated clean, so that the prevented-violation rate is measured rather than trivially perfect. Scripted proposals and assertions replace a live agent; the harness makes no model, external-system, or wall-clock read. Its purpose is to check whether the specified adjudication, invalidation, and replay paths produce the intended outcomes under the stipulated policies and event sequences.

We compare (i) a point-in-time evaluation configuration that adjudicates a proposal once and carries its allowance forward without a later validity check, (ii) the proposed state-bound configuration using the same recorded policy, authority, and signal inputs, and (iii) a whole-state fingerprinting ablation that treats all case-state changes as relevant. The comparison isolates state-bound temporal invalidation relative to a deliberately non-re-evaluating configuration. It does not compare against a dispatch-time policy engine with independently maintained approval provenance, or establish advantage over all agent guardrails and runtime enforcement architectures.

\subsection{Results and interpretation}\label{results-and-interpretation}

Table~\ref{tab:results} reports the aggregate counts. All 15 later-invalidated approvals are identified by the state-bound configuration and not revisited by the point-in-time configuration. The state-bound configuration holds 14 affected actions before dispatch and addresses one compensable effect after it had landed. In the compensable case, a relevant state change occurred after valid dispatch but before the external effect completed; the resulting effect required compensation rather than prospective prevention. These outcomes should not be collapsed into a single prevention statistic.

Eight failures already present at proposal time are caught by both configurations; two of these are attempts to circumvent in-path controls--- an agent re-requesting dispatch of a denied action, and an agent asserting over a disqualifying fact---refused by both.

The 15 temporal invalidations and the five cases primarily labeled F2 describe different categorizations: temporal invalidation can expose an authority, evidence, grounding, prohibition, or rule problem; Table~\ref{tab:permode} lists each affected case's primary deficiency.

Two rates summarize the trade-off, with denominators fixed by the harness. The adverse-effect rate is the share of the 23 governed actions with a non-dispatch outcome under the final stipulated case state that nonetheless produce an external effect: 65.2\% (15/23) point-in-time versus 4.3\% (1/23) state-bound. The one state-bound effect was dispatched while valid and became adverse after a later state change but before external completion.

The escalation rate is the share of all 43 proposed actions routed to a human: 11.6\% (5/43) versus 32.6\% (14/43). The reduction in adverse external effects is accompanied by additional human attention; both rates describe the scripted replay, not production incidence. The 27 failure cases contain 28 governed actions; five are legitimately dispatched by both configurations because their deficiency is remedied in the event log before dispatch, leaving the 23 whose adjudicated outcome is non-dispatch.

One of the 15 clean cases requires an additional hold even though fresh adjudication permits the action: its amount changes within the same authority limit, invalidating the old allowance without rendering the new amount inadmissible. This is an unnecessary repeat decision, not an assertion that the final state violates policy. It motivates calibration of safe dependency scope while preserving soundness.

Table~\ref{tab:ablation} reports the fingerprinting ablation. The five-case stress set places 10--20 irrelevant events after pending allowances and contains no stale approvals. Whole-state fingerprinting produces 74 re-adjudications and five unnecessary invalidations; the declared decision-relevant projection produces none. The two configurations detect the same 15 stale approvals in the 42-case library. These results illustrate the selectivity mechanism for the dependencies encoded by the authors, not the reliability of automatically discovering all real-world dependencies.

Each case was replayed three times under each configuration, with byte-identical decision records and votes. This checks reproducibility of the harness's deterministic path; it is not a latency, scalability, or live operational reliability result.

\begin{table}[!t]
\caption{Reported counts from the 42-case replay. Denominators are stated where they differ from the number of cases; adverse-effect interpretation requires verification against the replay sequence.}
\label{tab:results}
\centering
\scriptsize
\setlength{\tabcolsep}{3pt}
\renewcommand{\arraystretch}{1.07}
\begin{tabular}{@{}>{\raggedright\arraybackslash}p{.62\columnwidth} >{\centering\arraybackslash}p{.155\columnwidth} >{\centering\arraybackslash}p{.155\columnwidth}@{}}
\toprule
Measure & Point-in-time & State-bound \\
\midrule
Later-invalidated approvals identified (15) & 0 & 15 \\
Affected actions held before dispatch & 0 & 14 \\
Affected landed effects compensated & 0 & 1 \\
Proposal-time failures caught by both & 8 & 8 \\
Clean cases unnecessarily held (15) & 0 & 1 \\
\textbf{Adverse external effects} (of 23 governed actions with non-dispatch final outcome) & 15 & 1 \\
Escalations to a human (of 43 proposed actions) & 5 & 14 \\
Byte-identical adjudication on repetition & yes & yes \\
\bottomrule
\end{tabular}
\end{table}
\begin{table}[!t]
\caption{Primary deficiency assigned to the 15 affected cases. An approval can additionally be temporally invalidated; F2 is not exclusive.}
\label{tab:permode}
\centering
\scriptsize
\setlength{\tabcolsep}{3pt}
\renewcommand{\arraystretch}{1.07}
\begin{tabular}{@{}>{\raggedright\arraybackslash}p{.81\columnwidth} >{\centering\arraybackslash}p{.11\columnwidth}@{}}
\toprule
Primary case label & Cases \\
\midrule
F1 Authority/standing & 3 \\
F2 Staleness without a more specific primary label & 5 \\
F3 Evidence & 2 \\
F4 Recorded grounding & 1 \\
F5 Prohibition & 1 \\
F6 Rule identity & 3 \\
\textbf{Total affected cases} & \textbf{15} \\
\bottomrule
\end{tabular}
\end{table}
\begin{table}[!t]
\caption{Whole-state versus decision-relevant fingerprinting ablation. ``False inv.'' denotes invalidation caused by events not requiring a changed substantive authorization; a relevant change can still require an unnecessary repeat approval when readjudication permits the action.}
\label{tab:ablation}
\centering
\scriptsize
\setlength{\tabcolsep}{3pt}
\renewcommand{\arraystretch}{1.07}
\begin{tabular}{@{}>{\raggedright\arraybackslash}p{.50\columnwidth} >{\centering\arraybackslash}p{.115\columnwidth} >{\centering\arraybackslash}p{.115\columnwidth} >{\centering\arraybackslash}p{.135\columnwidth}@{}}
\toprule
Configuration & Re-adj. & False inv. & Stale det. \\
\midrule
Whole-state, library & 23 & 1 & 15/15 \\
Decision-relevant, library & 19 & 0 & 15/15 \\
Whole-state, stress set & 74 & 5 & n/a \\
Decision-relevant, stress set & 0 & 0 & n/a \\
\bottomrule
\end{tabular}
\end{table}

\subsection{What the replay cannot establish}\label{what-the-replay-cannot-establish}

The authors defined the rules and cases, and the point-in-time comparator is intentionally unable to re-evaluate. Hence the counts demonstrate specification behavior under constructed conditions. A dispatch-time policy re-evaluator could detect many of the same changes. The additional semantics proposed here are the bound history of which prior approval is being used, what evidence and authority supported it, and whether that approval has been superseded. No head-to-head experiment against such a stronger comparator is claimed.

The scripted counts are not estimates of production incident prevalence or review capacity. In particular, the system may increase the number of repeat approvals or escalations; the practical burden requires measurement in a live deployment. The current paper does not report production latency or claim an empirical advantage over alternative implemented architectures.

\section{Discussion and Limitations}\label{discussion-and-limitations}
\label{sec:discussion}

\textbf{Interception.} An unfronted payment rail or database edit is F8. The controller cannot prevent an effect outside its registered path. The dispatch-time property is conditional on credential and integration topology.

\textbf{Ingestion and external effects.} A relevant update that has not yet reached the log can invalidate the real-world business basis without violating the stated theorem. State may also change after a valid dispatch but before an external provider commits. The architecture guarantees neither provider-side atomicity nor exactly-once external effect; compensation and reconciliation remain necessary.

\textbf{Dependency completeness.} Selective invalidation is sound only if the declared read set includes derived, cross-step, and policy-applicability dependencies. A missing dependency permits a falsely retained approval. Unknown dependencies call for conservative widening or review, at the cost of additional intervention.

\textbf{Extraction and evidence fidelity.} A parser can misrecord a loss amount and yield a reproducibly wrong decision. Typing preserves the origin of assertions but does not prove factual truth. Completeness checks cannot establish substantive professional correctness.

\textbf{Policy and workflow scope.} The controller enforces encoded obligations; an unencoded rule is unenforceable. It does not solve general planning failure, deadlock, business desirability, or professional judgment that cannot be represented as an obligation. A human reviewer may still make a poor decision even with accurate recorded context.

\textbf{Next steps.} Production incident libraries, stronger dispatch-time comparators, integration measurements, and review-load studies would establish how well this specification transfers outside the hand-written cases. These are future work, not unperformed experiments implied by the current results.

\section{Conclusion}\label{conclusion}
\label{sec:conclusion}

A business decision can be valid when issued and unusable when an agent later acts. This paper specifies an explicit state-bound allowance that records the decision basis, distinguishes human approval from current execution permission, invalidates associated allowances after relevant logged changes, and requires a new admissibility check at the registered dispatch boundary. It draws on established runtime enforcement, mutable authorization, process compliance, and transaction-control mechanisms, with a particular focus on long-lived business decisions. The 42-case scripted replay illustrates its temporal invalidation and selective fingerprinting behavior. Within the specified assumptions and registered surface, the resulting guarantee concerns validity at dispatch, not universal business correctness or the timing and uniqueness of external effects.

\bibliographystyle{IEEEtran}
\bibliography{references}

@article{yao2024tau,
  title         = {$\tau$-bench: A Benchmark for Tool-Agent-User Interaction in Real-World Domains},
  author        = {Yao, Shunyu and Shinn, Noah and Razavi, Pedram and Narasimhan, Karthik},
  journal       = {arXiv preprint arXiv:2406.12045},
  year          = {2024},
  eprint        = {2406.12045},
  archivePrefix = {arXiv},
  primaryClass  = {cs.AI}
}

@article{wang2026agentspec,
  title         = {{AgentSpec}: Customizable Runtime Enforcement for Safe and Reliable {LLM} Agents},
  author        = {Wang, Haoyu and Poskitt, Christopher M. and Sun, Jun},
  journal       = {arXiv preprint arXiv:2503.18666},
  year          = {2025},
  eprint        = {2503.18666},
  archivePrefix = {arXiv},
  primaryClass  = {cs.SE}
}

@inproceedings{mehmood2022blackbox,
  title         = {The Black-Box Simplex Architecture for Runtime Assurance of Autonomous {CPS}},
  author        = {Mehmood, Usama and Sheikhi, Sanaz and Bak, Stanley and Smolka, Scott A. and Stoller, Scott D.},
  booktitle     = {NASA Formal Methods (NFM)},
  series        = {Lecture Notes in Computer Science},
  volume        = {13260},
  pages         = {231--250},
  year          = {2022},
  publisher     = {Springer},
  doi           = {10.1007/978-3-031-06773-0_12},
  eprint        = {2102.12981},
  archivePrefix = {arXiv},
  primaryClass  = {eess.SY}
}

@article{leucker2009rv,
  title   = {A Brief Account of Runtime Verification},
  author  = {Leucker, Martin and Schallhart, Christian},
  journal = {Journal of Logic and Algebraic Programming},
  volume  = {78},
  number  = {5},
  pages   = {293--303},
  year    = {2009},
  doi     = {10.1016/j.jlap.2008.08.004}
}

@article{schneider2000eas,
  title   = {Enforceable Security Policies},
  author  = {Schneider, Fred B.},
  journal = {ACM Transactions on Information and System Security},
  volume  = {3},
  number  = {1},
  pages   = {30--50},
  year    = {2000},
  doi     = {10.1145/353323.353382}
}

@article{ligatti2005editautomata,
  title   = {Edit Automata: Enforcement Mechanisms for Run-time Security Policies},
  author  = {Ligatti, Jay and Bauer, Lujo and Walker, David},
  journal = {International Journal of Information Security},
  volume  = {4},
  number  = {1--2},
  pages   = {2--16},
  year    = {2005},
  doi     = {10.1007/s10207-004-0046-8}
}

@article{ly2015compliance,
  title   = {Compliance Monitoring in Business Processes: Functionalities, Application, and Tool-Support},
  author  = {Ly, Linh Thao and Maggi, Fabrizio Maria and Montali, Marco and Rinderle-Ma, Stefanie and van der Aalst, Wil M. P.},
  journal = {Information Systems},
  volume  = {54},
  pages   = {209--234},
  year    = {2015},
  doi     = {10.1016/j.is.2015.02.007}
}

@article{singh1999commitments,
  title   = {An Ontology for Commitments in Multiagent Systems: Toward a Unification of Normative Concepts},
  author  = {Singh, Munindar P.},
  journal = {Artificial Intelligence and Law},
  volume  = {7},
  number  = {1},
  pages   = {97--113},
  year    = {1999},
  doi     = {10.1023/A:1008319631231}
}

@article{south2025delegation,
  title         = {Authenticated Delegation and Authorized {AI} Agents},
  author        = {South, Tobin and Marro, Samuele and Hardjono, Thomas and Mahari, Robert and Whitney, Cedric Deslandes and Greenwood, Dazza and Chan, Alan and Pentland, Alex},
  journal       = {arXiv preprint arXiv:2501.09674},
  year          = {2025},
  eprint        = {2501.09674},
  archivePrefix = {arXiv},
  primaryClass  = {cs.AI}
}

@techreport{hu2014abac,
  title       = {Guide to Attribute Based Access Control ({ABAC}) Definition and Considerations},
  author      = {Hu, Vincent C. and Ferraiolo, David and Kuhn, Rick and Schnitzer, Adam and Sandlin, Kenneth and Miller, Robert and Scarfone, Karen},
  institution = {National Institute of Standards and Technology},
  number      = {NIST Special Publication 800-162},
  year        = {2014},
  doi         = {10.6028/NIST.SP.800-162}
}

@article{park2004ucon,
  title   = {The {UCON}$_{ABC}$ Usage Control Model},
  author  = {Park, Jaehong and Sandhu, Ravi},
  journal = {ACM Transactions on Information and System Security},
  volume  = {7},
  number  = {1},
  pages   = {128--174},
  year    = {2004},
  doi     = {10.1145/984334.984339}
}

@article{kung1981occ,
  title   = {On Optimistic Methods for Concurrency Control},
  author  = {Kung, H. T. and Robinson, John T.},
  journal = {ACM Transactions on Database Systems},
  volume  = {6},
  number  = {2},
  pages   = {213--226},
  year    = {1981},
  doi     = {10.1145/319566.319567}
}

@article{yehudai2025survey,
  title         = {Survey on Evaluation of {LLM}-based Agents},
  author        = {Yehudai, Asaf and Eden, Lilach and Li, Alan and Uziel, Guy and Zhao, Yilun and Bar-Haim, Roy and Cohan, Arman and Shmueli-Scheuer, Michal},
  journal       = {arXiv preprint arXiv:2503.16416},
  year          = {2025},
  eprint        = {2503.16416},
  archivePrefix = {arXiv},
  primaryClass  = {cs.AI}
}

@article{dong2024agentops,
  title         = {{AgentOps}: Enabling Observability of {LLM} Agents},
  author        = {Dong, Liming and Lu, Qinghua and Zhu, Liming},
  journal       = {arXiv preprint arXiv:2411.05285},
  year          = {2024},
  eprint        = {2411.05285},
  archivePrefix = {arXiv},
  primaryClass  = {cs.AI}
}

\end{document}